\definecolor{tabfirst}{rgb}{1, 0.7, 0.7}
\definecolor{tabsecond}{rgb}{1, 0.85, 0.7}
\definecolor{tabthird}{rgb}{1, 1, 0.7}
\newlength\paramargin
\newlength\abovetabcapmargin
\newlength\belowtabcapmargin
\newlength\abovefigcapmargin
\newlength\belowfigcapmargin
\newlength\aboveeqmargin
\newlength\beloweqmargin
\def\w{{\mathbf{w}}}
\def\z{{\mathbf{z}}}
\def\Sc{{\mathcal{S}}}
\newcommand{\softmax}{\mathrm{softmax}}
\newcommand{\code}[1] {\texttt{#1}}
\declaretheorem[]{proposition}
\declaretheorem[]{lemma}
\def\Oc{{\mathcal{O}}}
\def\Rd{{\mathbb{R}}}
\def\Ed{{\mathbb{E}}}
\definecolor{GoogleBlue}{RGB}{66, 133, 244}
\definecolor{GoogleRed}{RGB}{234, 67, 53}
\definecolor{GoogleYellow}{RGB}{251, 188, 4}
\definecolor{GoogleGreen}{RGB}{52, 168, 83}
\newcommand{\method}{VPS}
\newcommand{\vpst}[1]{\cellcolor{GoogleBlue!10}#1}
\newcommand{\vpsf}[1]{\cellcolor{GoogleBlue!20}#1}
\definecolor{cornellred}{rgb}{0.7, 0.11, 0.11}
\definecolor{cadmiumgreen}{rgb}{0.0, 0.42, 0.24}
\definecolor{aliceblue}{rgb}{0.91, 0.94, 0.97}
\definecolor{darkblue}{rgb}{0.83, 0.89, 0.97}
\definecolor{Red7}{rgb}{0.941, 0.243, 0.243}
\definecolor{Green7}{RGB}{55, 178, 77}
\definecolor{Blue9}{rgb}{0.098,0.3,0.9}
\definecolor{codegreen}{rgb}{0,0.6,0}
\definecolor{codegray}{rgb}{0.5,0.5,0.5}
\definecolor{codepurple}{rgb}{0.58,0,0.82}
\definecolor{backcolour}{rgb}{0.95,0.95,0.92} 
\lstdefinestyle{pytorchstyle}{
    language=Python,
    backgroundcolor=\color{backcolour},
    commentstyle=\color{codegreen}\itshape,
    keywordstyle=\color{codepurple}\bfseries,
    numberstyle=\tiny\color{codegray},
    stringstyle=\color{codepurple},
    basicstyle=\ttfamily\footnotesize, 
    breakatwhitespace=false,         
    breaklines=true,                 
    captionpos=b,                    
    keepspaces=true,                 
    numbers=left,                    
    numbersep=5pt,                  
    showspaces=false,                
    showstringspaces=false,
    showtabs=false,                  
    tabsize=2,
    frame=lines, 
    framesep=2mm,
    morekeywords={torch, stack, sum, expand, cat, multinomial, softmax} 
}
\definecolor{cvprblue}{rgb}{0.21,0.49,0.74}
\title{
Video Parallel Scaling: Aggregating Diverse Frame Subsets for VideoLLMs
}
\author{Hyungjin Chung$^1${\quad} 
Hyelin Nam$^{1,2}${\quad} 
Jiyeon Kim$^3${\quad} 
Hyojun Go$^{1,4}${\quad} 
Byeongjun Park$^1$\\  
Junho Kim$^1${\quad} 
Joonseok Lee$^1${\quad} 
Seongsu Ha$^{1,5}${\quad} 
Byung-Hoon Kim$^{1,6}$
\\
$^1$EverEx\,\,\,
$^2$University of Michigan\,\,\,
$^3$KAIST\,\,\,
$^4$ETH Z{\"u}rich\,\,\,\\
$^5$UNC Chapel Hill\,\,\,
$^6$Yonsei University\\
}
\begin{document}
\maketitle

\begin{strip}
    \centering
    \resizebox{\textwidth}{!}{
    \includegraphics[width=\linewidth]{./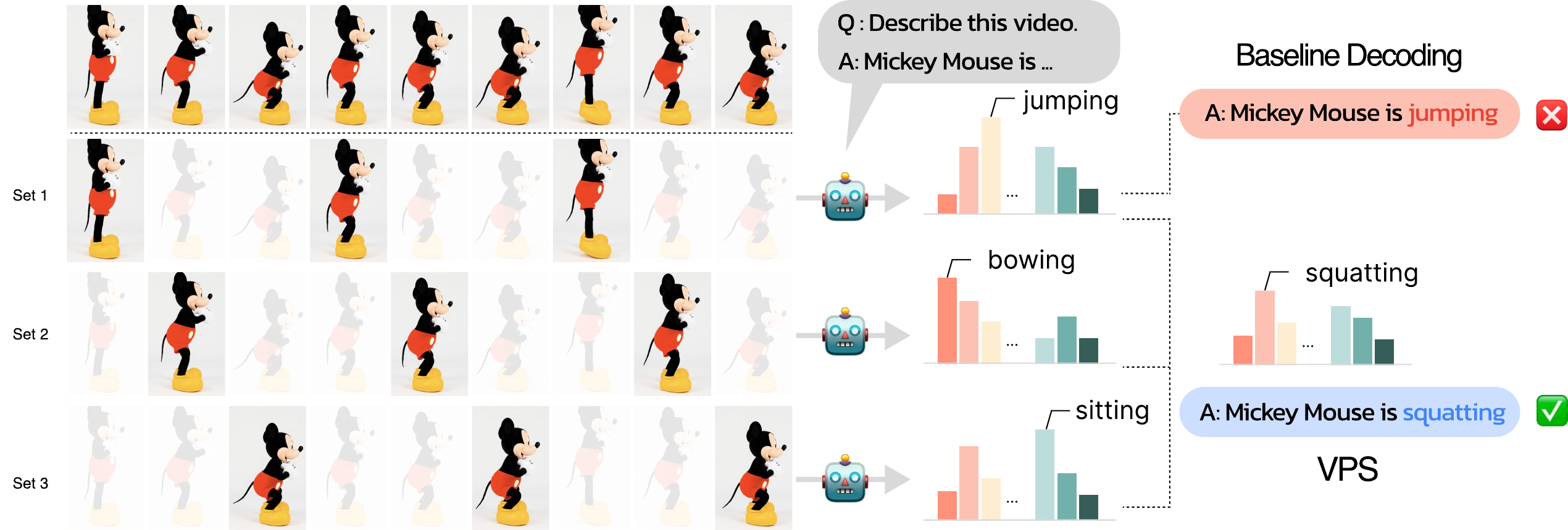}
    }
    \vspace{\abovefigcapmargin}
    \captionof{figure}{
    \textbf{Conceptual Illustration of \method.} VideoLLMs take as input subsampled frames from the original video (Video generated with Veo3~\citep{DeepMind2025Veo3}.), limiting their understanding capabilities. Increasing the sampled frames within context leads to computation/memory issues or decrease in performance. In contrast, \method~keeps the number of subsampled frames, and scales the number of streams {\em in parallel},
    with each stream attending to different frames. By aggregating the probability values from each stream, \method\ results in 1) fine-grained motion perception, 2) consistent gains with more streams, without increasing the context length.}
    \label{fig:concept}
\end{strip}

\begin{abstract}
Video Large Language Models (VideoLLMs) face a critical bottleneck: increasing the number of input frames to capture fine-grained temporal detail leads to prohibitive computational costs and performance degradation from long context lengths. We introduce Video Parallel Scaling (VPS), an inference-time method that expands a model's perceptual bandwidth without increasing its context window. VPS operates by running multiple parallel inference streams, each processing a unique, disjoint subset of the video's frames. By aggregating the output probabilities from these complementary streams, VPS integrates a richer set of visual information than is possible with a single pass. We theoretically show that this approach effectively contracts the Chinchilla scaling law by leveraging uncorrelated visual evidence, thereby improving performance without additional training. Extensive experiments across various model architectures and scales (2B-32B) on benchmarks such as Video-MME and EventHallusion demonstrate that VPS consistently and significantly improves performance. VPS scales more favorably than other parallel alternatives (e.g. Self-consistency) and is complementary to other decoding strategies, offering a memory-efficient and robust framework for enhancing the temporal reasoning capabilities of VideoLLMs.
\end{abstract}

\section{Introduction}
\label{sec:intro}

Video Large Language Models (VideoLLMs) inherit the cross-modal reasoning abilities of image-based VLMs, extending the models with multi-frame inputs. This access to temporal information enables them to (i) reason about causality and motion by tracing how an object’s state evolves, (ii) localize or summarize long-horizon events that unfold over minutes and hours, and (iii) carry out temporal reasoning that demands ordering, counting, or cross-referencing disjoint moments: capabilities fundamentally out of reach for single-frame systems. The latest closed models such as Gemini-2.5~\citep{comanici2025gemini} and GPT-4o~\citep{hurst2024gpt} as well as open models such as Qwen2.5-VL~\citep{bai2025qwen2}, Gemma3~\citep{team2025gemma}, and InternVL3~\citep{zhu2025internvl3} all showcase these abilities, and recent benchmarks~\citep{nagrani2025minerva} confirm that many reasoning tasks truly require video context rather than per-frame cues.

Extending from images to video, however, {\em amplifies} several long-standing weaknesses. 
Even for Vision Language Models (VLMs) that process a single image, the image tokens constitute the vast majority of the context length~\citep{yin2025lifting,bi2025unveiling}.
Every additional frame inflates the sequence length, so that the token counts and compute budgets explode. Compute-optimal analysis shows that simply feeding more frames is rarely the most efficient use of inference FLOPs~\citep{wang2025inference}. Even when memory suffices, the accuracy almost always deteriorates with context length, mirroring the ``context-rot''~\citep{hong2025context} observed for LLMs. Fine-grained kinematics—subtle velocity changes, limb articulation, or repetition count—remain difficult~\citep{hong2025motionbench,nagrani2025minerva}. Detailed queries often elicit hallucinated or mis-ordered events~\citep{zhang2024eventhallusion}.
Moreover, unlike text-based large language models (LLMs), where deeper inference-time reasoning reliably improves accuracy~\citep{jaech2024openai,wu2024inference}, recent evidence shows that VideoLLM errors stem primarily from impoverished visual inputs~\citep{upadhyay2025timeblindness,li2025f16,liu2025more}; adding more ``thoughts'' offers limited gains unless the model's perceptual bandwidth is improved.
Hence, simply investing more {\em sequential} inference-scaling to a VideoLLM offers sharply diminishing returns; meaningful progress instead demands strategies that enlarge the model’s perceptual bandwidth.

To overcome the perceptual bottleneck without aggravating the quadratic cost of longer clips, we introduce \textbf{Video Parallel Scaling (\method)}—an inference-time strategy that trades additional parallel computation for richer visual coverage. Concretely, \method\ spawns $J$ independent streams, each sampling a different subset of frames, and fuses their predictions through a weighted aggregation. Because streams are processed concurrently, total FLOPs grow linearly with $J$ while the sequence length\footnote{This computation is also embarrassingly parallelizable.}, and hence the memory footprint remains unchanged.
By this construction, we can leverage more and more visual information by increasing the number of parallel streams, which would have been simply dropped otherwise. As illustrated in Fig.~\ref{fig:concept}, each stream provides {\em complementary} visual evidence, which, when combined together, yields a correct answer.
Furthermore, we theoretically show that \method\ offers a way to contract the scaling law~\citep{hoffmann2022training} {\em without} additional training, by showing that the loss only contracts faster when using uncorrelated subsets of frames.

Through extensive experiments covering different model sizes (2B - 32B), VideoLLM architectures (Qwen2.5-VL~\citep{bai2025qwen2}, Gemma3~\citep{team2025gemma}, InternVL3~\citep{zhu2025internvl3}), and benchmarks (Video-MME~\citep{fu2025video}, EventHallusion~\citep{zhang2024eventhallusion}), we show that \method~consistently outperforms baselines, resulting in additional improvements with more parallel streams. We further show that \method\ scales favorably to other inference-time strategies, and is often orthogonal to the other approaches such that it can be used in harmony.

\section{Background}
\label{sec:background}

\subsection{Video Large Language Models}
\label{subsec:videollms}

State-of-the-art VideoLLMs pipe a vision backbone---usually a vision transformer (ViT)~\citep{dosovitskiy2021vit} variant with positional encoding~\citep{su2021rope,press2022alibi}---into a frozen or lightly-tuned LLM head. Typical open models include Qwen-VL 2.5~\citep{bai2025qwen2}, Gemma-3~\citep{team2025gemma}, and InternVL3~\citep{zhu2025internvl3}. Each attaches a lightweight projection layer that maps per-frame patch embeddings to language tokens, enabling the downstream transformer blocks to treat visual tokens and text tokens uniformly. While open-source VideoLLMs have made rapid progress, comprehensive evaluations show they still hallucinate events, mis-count repetitive actions, and crumble when genuine temporal reasoning is required, especially on long or high-frame-rate clips~\citep{zhang2024eventhallusion,hong2025motionbench,nagrani2025minerva,fu2025video,li2025temporal}.

\paragraph{Challenges of long context}
\label{subsec:challenge_long_context}

Significant progress has been made to enable the use of long context in LLMs~\citep{xiong2024effective,chen2024longlora,team2024gemini} to a point where 128k context window is now standard in many proprietary and open checkpoints~\citep{liu2025comprehensive}. Nevertheless, feasibility does not imply proficiency. A wide collection of studies points out that the decrease in LLM performance is inevitable with longer context~\citep{li2025longcontext,hsieh2024ruler,bai2024longbench,an2025why,shi2025explaining,hong2025context}. This finding naturally translates into VideoLLMs, where increasing the number of frames for input leads to a degradation in the performance above a certain threshold~\citep{gao2025exploring,wang2025inference}, especially for VideoLLMs of modest size. Because every additional frame multiplies the token sequence length, most systems either fix a static budget (e.g. $\leq$ 32 frames) or down-sample videos to $\leq 1$ frames per second (fps). On the other hand, we would want to show the model as many video frames as possible, especially when the goal is to comprehend the intricate temporal details. 
%
%
A natural question arises: is this possible without increasing the context length? \method\ attempts to give an answer to this question with a positive, namely by constructing parallel streams with different video frames shown for each stream.

\subsection{Inference-time Enhancement Strategies}

We categorize the training-free inference-time enhancement strategies into two: 1) non-scaling approaches that have static compute and gain, 2) scaling approaches that enjoy improved performance when more compute is used.

\paragraph{Non-scaling approaches}

Several frame selection methods have been devised in order to improve upon the usual uniform sampling strategy, to manage the number of frames used for VideoLLMs within a budget. AKS~\citep{tang2025adaptive} and BOLT~\citep{liu2025bolt} use query-frame similarity as a measure to select more relevant frames. VideoTree~\citep{wang2025videotree} employs a hierarchical structure to select frames for long videos. Modifications to attention have also been devised. SlowFast-LLaVA~\citep{xu2024slowfast} uses two different sequences of video frames that are sampled with different fps and combines them with concatenation. 
The MovieChat series~\citep{song2024moviechat,song2025moviechat+} proposes to select and retain task-relevant video segments through sparse memory mechanisms.
Other methods such as modifying the attention, or leveraging external vision models for better interpretation, have also been proposed. SlowFocus~\citep{nie2024slowfocus} introduces multiple frequency mixing attention and mixed frequency sampling from relevant segment grounding. MVU~\citep{ranasinghe2025understanding} proposes to use off-the-shelf vision models to decipher the video into language, then uses the summarized information as the input context. Token merging strategies~\citep{fu2024framefusion,hyun2025multi} attempt to preserve the original accuracy while using a fraction of the tokens.

Methods that operate on the logit-probability level are also popular. Contrastive decoding (CD)~\citep{li2023contrastive,leng2024mitigating} induces a vector nudge in the logit space by contrasting positive and negative pairs. Temporal contrastive decoding (TCD)~\citep{zhang2024eventhallusion} extends CD to video sequence. RITUAL~\citep{woo2024ritual} constructs an augmented view of an image, and sums the logit values before decoding.

\paragraph{Scaling approaches}

Best-of-N (BoN) sampling~\citep{stiennon2020learning} selects the answer with the highest reward after running $N$ parallel decoding streams. Speculative rejection~\citep{sun2024fast} improves naive BoN by incorporating rejection sampling with reward models during the sampling process. Jinnai \etal \cite{jinnai2024regularized} propose Regularized BoN, and Ichihara \etal \cite{ichihara2025evaluation} extend this method into a stochastic version, showing theoretical guarantees. While effective, one can only use these variants when having access to a reward function.
Self-consistency~\citep{wang2023selfconsistency} selects the mode of the parallel streams, and hence is free from the dependence on external reward models. On the other hand, Chain-of-Thought (CoT)~\citep{kojima2022large} is a sequential scaling approach that forces the model to reason before answering directly. Tree-of-Thoughts (ToT)~\citep{yao2023tree} generalizes CoT through a search process. The advantage of the latter approaches is that they can be used without access to external reward functions.
\method~is a method that belongs to the {\em scaling} category that is {\em free} from external dependencies, which scales favorably compared to other parallel scaling methods such as Self-consistency.

While not training-free, ParScale~\citep{chen2025parallel} is highly relevant to our work, where the authors propose a way to perform prefix tuning for $J$ different streams, so that when decoding the next token, each stream offers a different {\em view} of the same text. We note that ParScale requires optimizing the prefix during training. In contrast, \method\ is completely training-free, as it is easy to construct different views simply by selecting different frames from the video.

\section{Method}
\label{sec:method}

\begin{figure*}[!t]
    \centering
    \includegraphics[width=\linewidth]{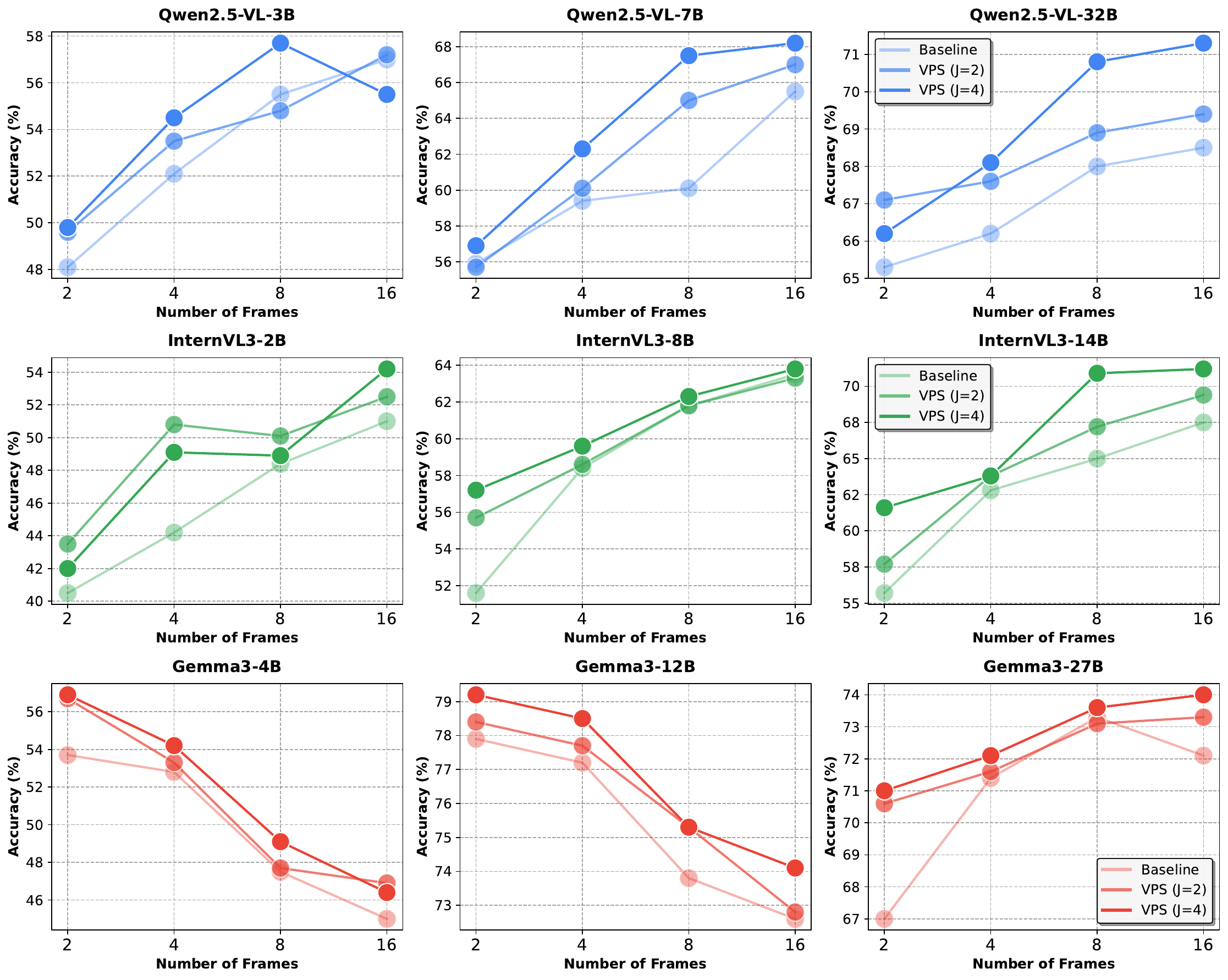}
    \caption{
    \textbf{\method\ consistently improves performance across all dimensions}. Across 3 different model classes (Qwen-2.5-VL, InternVL3, Gemma3), 3 different size (2B - 32B), and number of frames used in context, \method\ offers improved results with clearer trends with larger models. y-axis denotes the accuracy in the EventHallusion binary QA.
    }
    \vspace{-0.3cm}
    \label{fig:main}
\end{figure*}

\subsection{Video Parallel Scaling}
\label{subsec:main}

VideoLLMs take as input a subsampled version of $T$-frame input video $I_{1:T}$. Define a frame selector function $\Sc_K: \{1, \dots, T\} \mapsto \{0, 1\}$, which selects $K$ out of $T$ frames:
\begin{align}
\label{eq:frame_selector}
    \Sc_K(t) = 
    \begin{cases}
        1, & \text{if frame } t \text{ is kept}, \\[4pt]
        0, & \text{otherwise}.
    \end{cases}
\end{align}
The kept indices are $K := \{t | \Sc_K(t) = 1\} = \{t_1 < \cdots < t_{|K|}\}$. For every frame $I_{t_k}$, embeddings are extracted from the vision encoder $f^\phi$ to get $M$ soft tokens of dimension $d$
\begin{align*}
    V_{t_k} = f^\phi(I_{t_k}) = (v^1_{t_k}, \cdots, v^M_{t_k}) \in \Rd^{M \times d},
\end{align*}
which are concatenated frame-wise to get $V = [V_{t_1}, \cdots, V_{t_{|K|}}]$.
Denote $\Sigma$ as the vocabulary.
The probability of the answer $y_{1:N} \in \Sigma^N$ given the query text prompt $q_{1:L} \in \Sigma^L$ and the vision context $V$ is modeled as
\begin{align}
    p^\theta(y_{1:N}|q_{1:L}, V) = \prod_{n=1}^N p^\theta(y_n|y_{<n}, q_{1:L}, V),
\end{align}
where the probability values are obtained through the softmax of logits, i.e. 
\begin{align}
    p^\theta(y_t|y_{<t}, q_{1:L}, V) = \softmax(\z_t), \\ 
    \z_t = g^\theta(y_{<t}, q_{1:L}, V) \in \Rd^{|\Sigma|}.
\end{align}
Assume that we have a set of $J$ frame selection functions $\{\Sc_{K_1}, \cdots, \Sc_{K_J}\}$, which would lead to a different set of soft tokens $V_j$ and estimated logit-probability values. 
Then, our goal is to aggregate the predictions from each stream so that we can incorporate the information from different frames in a collaborative fashion.
\begin{align}
\label{eq:prob_avg_fusion}
    \bar{p}^\theta(y_t|y_{<t}, q_{1:L}, V) = \sum_{j=1}^J w_j p^\theta_j(y_t|y_{<t}, q_{1:L}, V_j), \\ 
    \w = [w_1, \cdots, w_J] \in \Delta^{J},
\end{align}
where $\Delta^{J}$ denotes the $J$-simplex\footnote{Alternatively, one can also aggregate the logit values. We find that both approaches lead to similar results. See App.~\ref{appendix:results} for further discussion}.
Once a token is sampled, i.e. $y_t \sim \bar{p}_\theta(y_t|\cdot)$, we concatenate the same sampled token to each stream, then iterate until the end of sequence. See Fig.~\ref{fig:concept} for an illustration.
While it is free to choose any selector functions $\Sc_j$ and weighting functions $w_j$ for VPS,
a canonical example would be using uniform sampling for all the streams but with varying offsets. For instance, for a $T = 64$ frame video with $|K| = 4$ frames and $J = 4$ streams, the following frames would be selected: $K_1 = \{0, 16, 32, 48\}, K_2 = \{4, 20, 36, 52\}, K_3 = \{8, 24, 40, 56\}, K_4 = \{12, 28, 44, 60\}$. Note that dropping $K_{2:4}$ degrades the process to baseline VideoLLM sampling. 

\paragraph{Choice of weights}
A canonical weighting would be to simply use $w_j = 1/J,\,\forall j$. In fact, under symmetry assumptions of each stream, classical results on ensembles show that setting uniform weight is optimal~\cite{claeskens2008model}. Further, equal weighting tends to be near optimal even when compared to cases where the weights are inferred from finite number of data samples~\cite{timmermann2006forecast}. 
Hence, throughout the remainder of the manuscript, we use this canonical sampling and weighting scheme if not specified otherwise. We expand on more advanced weighting schemes in Appendix~\ref{appendix:entropy_weighting}.

\subsection{Theoretical Analysis}
\label{subsec:theory}

Here, we analyze how \method~effectively scales under the lens of Chinchilla scaling law, following the presentation in~\cite{hoffmann2022training,chen2025parallel}. Through the analysis, we reveal how \method~is able to improve the performance of VideoLLMs, as well as deduce desirable strategies for frame sampling in each stream.
Throughout the section, for simplicity, we denote $x = [q_{1:L}, V]$ as the context, $y$ as the next token label, and assume $w_j = 1/J$ for simplicity. Under this notation, the true next token distribution reads $p(y|x)$. We further let $x_j = [q_{1:L}, V_j]$ denote the context induced by selecting a subset of frames shown to stream $j$ selected by $\Sc_{K_j}$. Further, denote $p_j(y|x_j)$ the prediction of stream $j$. The details of the treatment along with the proofs and derivations are deferred to Appendix~\ref{appendix:derivation}.

We first review the simplified version of Chinchilla scaling law~\citep{hoffmann2022training}\footnote{The treatment follows \cite{chen2025parallel}, ignoring the training tokens spent.} of LLMs. Concretely, the cross-entropy (CE) loss $L$ of the model with $N$ parameters follows
\begin{align}
\label{eq:simplified_chinchilla}
    L(N) = E + \frac{A}{N^\alpha},
\end{align}
where $E$ is the irreducible entropy of the natural text, and $A,\alpha$ are constants. 

\begin{table*}[!th]
\centering\small
\captionof{table}{
\textbf{Results of \method\ on different categories of each benchmark.} \method\ improves the performance consistency across categories, regardless of the model. 32 frames are used for Video-MME (Avg. length: $\sim$ 17 min.) and 8 frames are used for EventHallusion (Avg. length $\sim$ 20 sec.).
}
\resizebox{0.9\textwidth}{!}{
\begin{tabular}{l l c c c c c c c c}
\toprule
& & \multicolumn{4}{c}{\textbf{Video-MME}} & \multicolumn{4}{c}{\textbf{EventHallusion}}\\
\cmidrule(lr){3-6}
\cmidrule(lr){7-10}
Model & Method & Short & Medium & Long & Overall & Entire & Misleading & Mix & Overall \\
\midrule
\multirow{3}{*}{\makecell[l]{Qwen2.5‑VL‑7B\\\citep{bai2025qwen2}}} & Baseline & 0.656 & 0.509 & 0.441 & 0.535 & 0.579 & 0.487 & 0.843 & 0.601 \\
& \vpst{VPS ($J=2$)} & \vpst{0.662} & \vpst{\textbf{0.523}} & \vpst{0.461} & \vpst{0.549} & \vpst{0.605} & \vpst{0.560} & \vpst{0.872} & \vpst{0.650} \\
& \vpsf{VPS ($J=4$)} & \vpsf{\textbf{0.668}} & \vpsf{\textbf{0.523}} & \vpsf{\textbf{0.467}} & \vpsf{\textbf{0.553}} & \vpsf{\textbf{0.658}} & \vpsf{\textbf{0.560}} & \vpsf{\textbf{0.912}} & \vpsf{\textbf{0.675}} \\
\midrule
\multirow{3}{*}{\makecell[l]{InternVL3-8B\\\citep{zhu2025internvl3}}} & Baseline & 0.719 & 0.570 & 0.506 & 0.598 & \textbf{0.421} & 0.616 & 0.843 & 0.618 \\
& \vpst{VPS ($J=2$)} & \vpst{\textbf{0.742}} & \vpst{0.570} & \vpst{0.506} & \vpst{0.598} & \vpst{0.404} & \vpst{0.622} & \vpst{\textbf{0.852}} & \vpst{0.619} \\
& \vpsf{VPS ($J=4$)} & \vpsf{0.728} & \vpsf{\textbf{0.603}} & \vpsf{\textbf{0.533}} & \vpsf{\textbf{0.622}} & \vpsf{0.412} & \vpsf{\textbf{0.632}} & \vpsf{0.843} & \vpsf{\textbf{0.623}} \\
\midrule
\multirow{3}{*}{\makecell[l]{Gemma3-12B\\\citep{team2025gemma}}} & Baseline & 0.509 & 0.466 & 0.438 & 0.471 & 0.605 & 0.803 & 0.824 & 0.753 \\
& \vpst{VPS ($J=2$)} & \vpst{0.513} & \vpst{\textbf{0.470}} & \vpst{\textbf{0.454}} & \vpst{0.479} & \vpst{0.612} & \vpst{0.812} & \vpst{0.829} & \vpst{0.761} \\
& \vpsf{VPS ($J=4$)} & \vpsf{\textbf{0.523}} & \vpsf{0.466} & \vpsf{0.452} & \vpsf{\textbf{0.480}} & \vpsf{\textbf{0.623}} & \vpsf{\textbf{0.824}} & \vpsf{\textbf{0.833}} & \vpsf{\textbf{0.770}} \\
\bottomrule
\end{tabular}
}
\label{tab:main}
\vspace{-0.3cm}
\end{table*}

For VideoLLMs, a single stream receives a subset of frames $x_j = \Sc_{K_j}(x)$ so that every $x_j$ is incomplete in information. Concretely, let
\begin{align}
    p_j(y|x_j) = p(y|x)(1 + \Delta_j), \quad \Delta_j := \frac{p_j - p}{p},
\end{align}
where $\Delta_j$ total relative error that measures the deviation of the stream's imperfect answer $p_j$ from the true answer $p$. The relative error can be split into two parts
\begin{align}
    \Delta_j = \underbrace{\Ed_{x,y}[\Delta_j]}_{{\rm Bias}} + \underbrace{\varepsilon_j}_{{\rm Variance}}.
\end{align}
The bias is the systematic or predictable part of the error, as stream $j$ always misses a specific set of frames, and the remaining is the variance, with $\Ed_{x,y}[\varepsilon_j] = 0$ by construction. 
Further define
\begin{align}
\label{eq:bias_from_seeing_fewer_frames}
    B_j = - \Ed_{x,y}[\Delta_j], \quad \varepsilon_j = \Delta_j + B_j.
\end{align}
Then, we have the following result
\begin{restatable}[informal]{proposition}{vpsscale}
\label{prop:vps}
    A VideoLLM that is shown some subset of frames $x_j$ follow
    \begin{align*}
        L_j^{\rm VideoLLM}(N) \;\approx\; E + \frac{A}{N^\alpha} + B_j.
    \end{align*}
    Further, the expected CE loss of \method~ with $J$ streams follow
    \begin{align*}
        L^{\rm\method}(N, J) \;\approx\; E + \frac{A}{(NJ^{1/\alpha})^\alpha}[1 + (J-1)\rho] + \bar{B}(J),
    \end{align*}
    where $\bar{B}(J) = \frac{1}{J}\sum_{j=1}^J B^{(j)}$, and $\rho$ is the correlation coefficient between $\varepsilon_i, \varepsilon_j, i \neq j$.
\end{restatable}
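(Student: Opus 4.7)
The plan is to work entirely in the multiplicative parametrization $p_j(y|x_j)=p(y|x)(1+\Delta_j)$, apply a second-order Taylor expansion of $\log(1+\cdot)$, take expectations, and then match the resulting noise term with the Chinchilla power law~\eqref{eq:simplified_chinchilla}. The same routine, applied to the mixture $\bar p=\tfrac1J\sum_j p_j$, delivers the VPS bound once the pairwise correlation $\rho$ between the $\varepsilon_j$'s is introduced.

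For a single stream, I would start from
\begin{align*}
L_j^{\rm VideoLLM}(N) = -\Ed_{x,y}\bigl[\log p(y|x)\bigr] - \Ed_{x,y}\bigl[\log(1+\Delta_j)\bigr],
\end{align*}
Taylor-expand $\log(1+\Delta_j)\approx \Delta_j-\tfrac12\Delta_j^2$, and substitute the bias/variance split $\Delta_j=-B_j+\varepsilon_j$ of~\eqref{eq:bias_from_seeing_fewer_frames} with $\Ed[\varepsilon_j]=0$. After dropping $O(B_j^2)$ corrections this produces $L_j^{\rm VideoLLM}(N)\approx E+B_j+\tfrac12\Var[\varepsilon_j]$. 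The key modelling step is to identify the reducible variance $\tfrac12\Var[\varepsilon_j]$ with the Chinchilla tail $A/N^\alpha$: $\varepsilon_j$ is precisely the mean-zero modelling error a larger network can eliminate, whereas $B_j$ is informational and cannot be removed by scale alone.

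For VPS, the same Taylor argument applied to $\bar p(y|x)=p(y|x)(1+\bar\Delta)$ with $\bar\Delta=\tfrac1J\sum_j\Delta_j$ yields
\begin{align*}
L^{\rm VPS}(N,J) \approx E - \Ed[\bar\Delta] + \tfrac12\Ed[\bar\Delta^2].
\end{align*}
Linearity gives $-\Ed[\bar\Delta]=\bar B(J)$, while the equal-variance/pairwise-correlation assumption on the $\varepsilon_j$'s collapses the double sum,
\begin{align*}
\Var[\bar\Delta] = \frac{1}{J^2}\sum_{i,j}\Cov(\varepsilon_i,\varepsilon_j) = \frac{\Var[\varepsilon]}{J}\bigl[1+(J-1)\rho\bigr].
\end{align*}
Substituting $\tfrac12\Var[\varepsilon]=A/N^\alpha$ together with the identity $1/(N^\alpha J)=1/(NJ^{1/\alpha})^\alpha$ then produces the stated expression.

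The hard part, and the only genuinely non-mechanical step, will be the identification $\tfrac12\Var[\varepsilon_j]\equiv A/N^\alpha$. It rests on two premises: that Chinchilla applies to the conditional subset-distribution learnt by each stream, and that the residual mean-zero variance (rather than the systematic bias) is exactly the object the scaling law governs. A secondary technical concern is the validity of the quadratic Taylor truncation, which requires $|\Delta_j|$ to be uniformly small; I would handle this by working in an $L^2$ neighbourhood of $p$ and absorbing higher-order remainders into $o(\Var[\varepsilon])$, but it is worth flagging since the informal nature of the statement hides these regularity conditions.
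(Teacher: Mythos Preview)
Your proposal is correct and follows essentially the same route as the paper: multiplicative parametrization $p_j=p(1+\Delta_j)$, second-order Taylor expansion of $\log(1+\Delta_j)$, the bias/variance split $\Delta_j=-B_j+\varepsilon_j$, identification of $\tfrac12\Var[\varepsilon_j]$ with the Chinchilla tail $A/N^\alpha$ (via the ParScale lemma), and then the same covariance computation on $\bar\Delta$ with the equal-variance/common-$\rho$ assumption. Your flagging of the identification $\tfrac12\Var[\varepsilon_j]\equiv A/N^\alpha$ as the only non-mechanical step is exactly right---the paper imports it from Lemma~\ref{lemma:parscale} rather than arguing it independently---and the higher-order remainders you worry about are precisely the $\Oc(B_j^2)+\Oc(\Delta_j^3)$ terms the paper carries explicitly in its formal statement.
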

Notice that $L^{\rm\method}$ is similar to what was shown in \cite{chen2025parallel}, but with an additional offset $\bar{B}(J)$, which stems from the fact that the VideoLLMs see partial video frames. Two conclusions can be derived.

First, we wish to keep $\rho \in [0, 1]$ as small as possible for the loss to be fast-decaying with $J$. For this, it is crucial that we select {\em distinct} frames for each stream to maximize diversity. Note that this is different from the data augmentation strategy in \cite{woo2024ritual}, which would yield high $\rho$ as both streams would see the same frames, and thus, small gains even when using parallel streams. In Sec.~\ref{sec:experiments}, we empirically show that this is indeed the case.

Second, increasing $J$ does not increase bias unless you pick a subset $x_j$ that yields high KL divergence ${\rm KL}(p \| p_j)$. To control this bias term, uniform strides of video frames per stream should be preferred, akin to how VideoLLMs are trained.
Notably, both conditions are satisfied when we use our canonical sampling scheme with uniform sampling per stream and constant phase offsets between the streams. 

\begin{figure*}[!t]
    \centering
    \includegraphics[width=\linewidth]{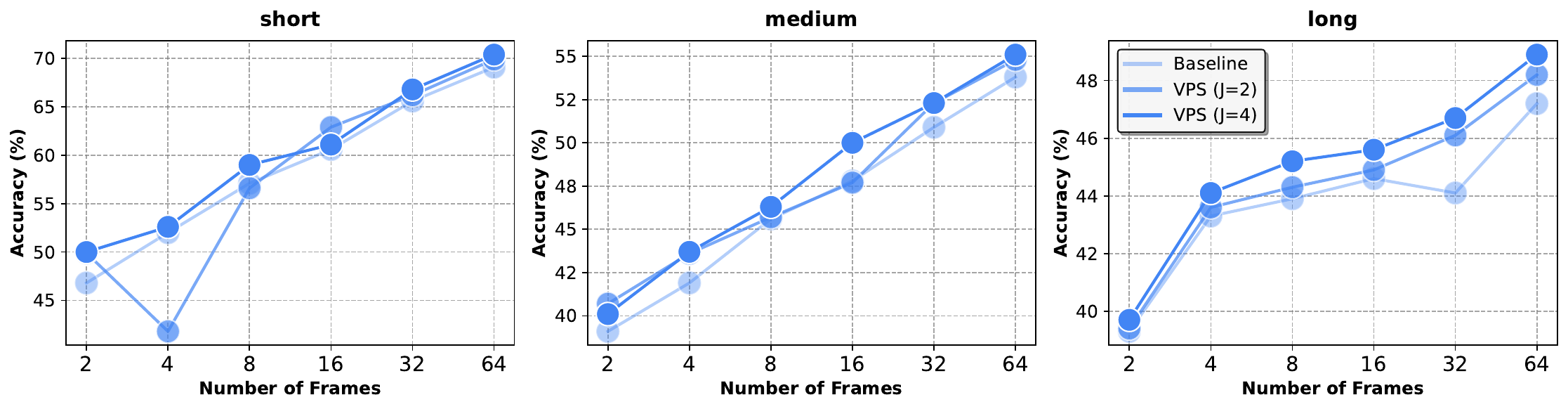}
    \caption{
    \textbf{\method\ scales better for longer videos}. Comparing the results of Qwen2.5-VL-7B on Video-MME for each category, we see a clearer trend in the long video category (15 - 30 min.).
    }
    \vspace{-0.3cm}
    \label{fig:videomme_length}
\end{figure*}

\begin{table*}[!t]
\centering
\resizebox{0.8\textwidth}{!}{
\begin{tabular}{c >{\raggedright\arraybackslash}p{1.5cm} p{11cm}}
\toprule
\multirow{3}{*}[-0.6cm]{\rotatebox{-90}{EventHallusion}} & Video & \includegraphics[width=\linewidth, valign=t]{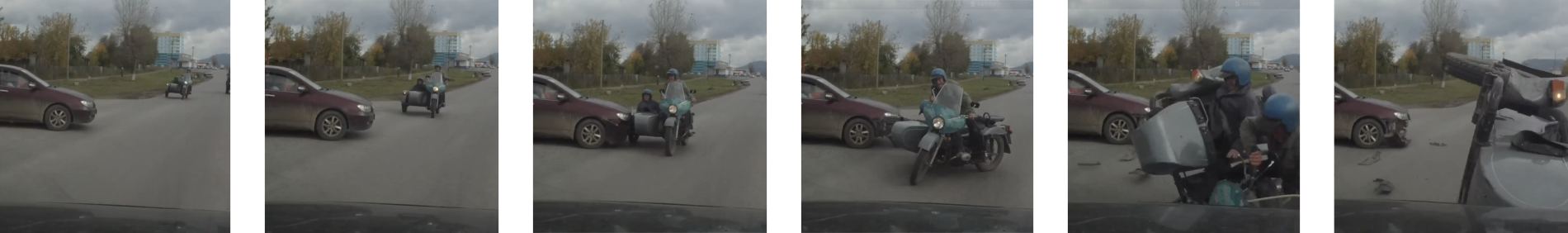}
\\
\cmidrule(l){2-3}
& Baseline & The video captures a motorcycle accident where a car and a motorcycle collide, resulting in the motorcycle being flipped over, \textcolor{GoogleRed}{while pedestrians nearby assess the situation on a rural road.} \\
\cmidrule(l){2-3}
& \method\ ($J=4$) & The video captures a collision between a car and a three-wheeled vehicle, resulting in significant damage to both vehicles, \textcolor{GoogleBlue}{as seen from the perspective of a car on the road behind them.} \\
\midrule
\multirow{2}{*}[-0.6cm]{\rotatebox{-90}{Video-MME}} & Video & \includegraphics[width=\linewidth, valign=t]{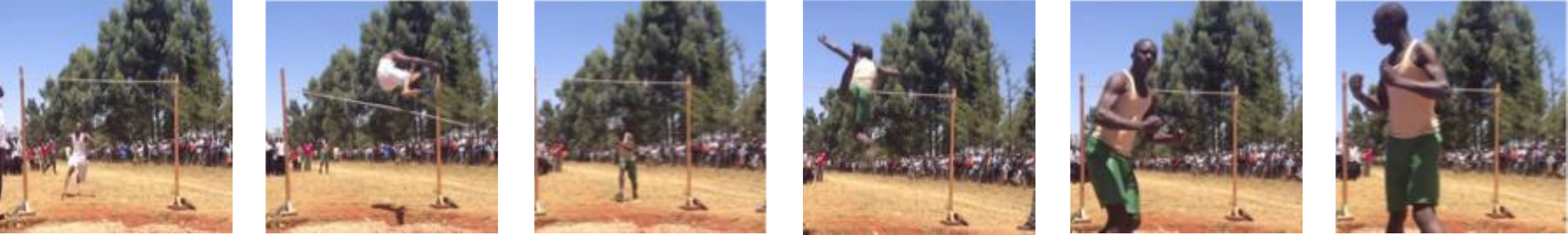}
\\
\cmidrule(l){2-3}
& Caption & Who ultimately won the high jump competition in the video?
\par
\begin{tabular}{@{} l p{9.8cm} @{}}
    Baseline: & \textcolor{GoogleRed}{A. Athlete wearing a white top and black trousers.} \\
             & B. Athlete wearing a white top and white shorts. \\
    VPS ($J=4$):     & \textcolor{GoogleBlue}{C. Athlete wearing a yellow top and green shorts.} \\
             & D. Athlete wearing a yellow top and black trousers.
\end{tabular} \\
\bottomrule
\end{tabular}
}
\caption{\textbf{Qualitative analysis of VPS.} VPS offers advantages in both free-form answering tasks, and mulitple-choice QA.}
\vspace{-0.3cm}
\label{tab:qualitative_result}
\end{table*}

\section{Experiments}
\label{sec:experiments}

\paragraph{Experimental settings}

We test our method on 3 different model classes, which are considered the state-of-the-art open-source VideoLLMs.
For Qwen2.5-VL~\citep{bai2025qwen2}, we take 3B, 7B, and 32B models. For Gemma3~\citep{team2025gemma}, 4B, 12B, and 27B models are used. For InternVL3~\citep{zhu2025internvl3}, 2B, 8B, 14B models are used. These models are evaluated across 2 different benchmarks: Video-MME~\citep{fu2025video} and EventHallusion~\citep{zhang2024eventhallusion}.
Video-MME is a general video understanding benchmark, which consists of 900 videos and 2,700 questions, categorizing the video into short, medium, and long. The length of the video ranges from a few minutes to hours. We consider the case without subtitles. EventHallusion is a benchmark focused on evaluating the hallucination of VideoLLMs, with three categories: entire, mix, and misleading. The benchmark consists of 400 videos and 711 questions, including binary QA and open-ended QA. The duration of the videos range from a few seconds to 30 seconds. Baseline refers to the standard next token sampling with subsampled video frames with a single stream.

\paragraph{\method\ consistently improves performance across all dimensions}

We test the performance of \method~by varying the number of frames used in context (2 - 16 frames for EventHallusion, 2 - 32 frames for Video-MME). In Fig.~\ref{fig:main} we see that \method~improves the performance of VideoLLMs as we increase the number of streams ($J$) used, regardless of the number of frames, the model class, and the scale. Due to the quadratic complexity of the attention operation, and the high token count of video frames, moderate-sized VideoLLMs easily face out-of-memory (OOM) issues. For instance, Qwen2.5-VL-32B model with 32 frames results in OOM on a A6000 GPU. In contrast, \method~offers a way of scaling compute to yield better performance with {\em constant} memory, achieving improvements by increasing $J$. When exceeding a certain budget, the performance of VideoLLMs tend to either plateau or decrease as one incorporates more frames into the context, as can also be observed in the plot in Fig.~\ref{fig:main}. \method~provides a viable alternative especially in this regime, by being able to scale the results with more compute in the parallel direction\footnote{This is especially evident with Gemma3, as can be seen in the bottom row of Fig.~\ref{fig:main}.}.
In Appendix Fig.~\ref{fig:main_videomme} we see a similar trend for Video-MME. Experimental details are provided in Appendix~\ref{appendix:exp_details_main}.

In Tab.~\ref{tab:main} we provide a more detailed view into how \method\ performs across different categories in each benchmark, where we see consistent improvements regardless of the category. Notice that we fixed the used frame count to 32 for Video-MME and 8 for EventHallusion, considering the average length of the videos. We further elucidate the results according to the different length and frame counts used in Fig.~\ref{fig:videomme_length}, where we show that the scaling behavior is increasingly well observed as the length of the video increases. Interestingly, the improvement from \method\ becomes more pronounced as model capacity grows and as video duration increases (Fig.~\ref{fig:main},~\ref{fig:videomme_length}), a direction where the community is already pushing.

\begin{figure*}[!t]
    \centering
    \includegraphics[width=\linewidth]{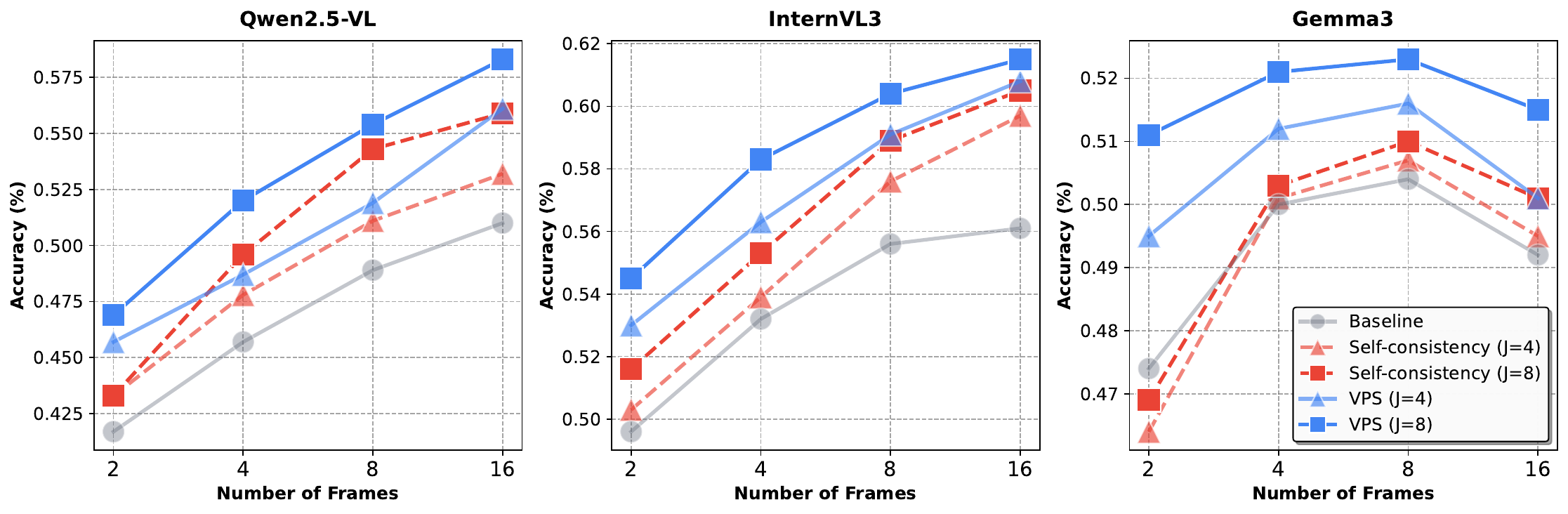}
    \caption{
    \textbf{\method\ scales favorably compared to Self-consistency.} On Video-MME, \method\ outperforms Self-consistency under the same budget by being able to incorporate information from different frames, rather than relying on the same information for all the streams.
    }
    \vspace{-0.3cm}
    \label{fig:scaling_vs_sc}
\end{figure*}

\paragraph{VPS scales favorably compared to Self-consistency}

Self-consistency~\citep{wang2023selfconsistency} is one of the few methods that allows parallel scaling without the reliance on external modules, e.g. reward functions. A natural questiona arises: Does \method~scale better than Self-consistency? In Fig.~\ref{fig:scaling_vs_sc}, we answer with a positive by showing that this is indeed the case, with the details on the experiments provided in Appendix~\ref{appendix:exp_self_consistency}. Notice that on a multiple-choice QA, where the model is forced to answer with a single token, the two strategies are equivalent with large enough sample size: 1) aggregating the probabilities from different streams, 2) sampling from each stream, and determining the answer through majority voting. In this regard, the only difference between Self-consistency and \method~in this constrained situation is whether the different streams $j$ sees different frames of video. Fig.~\ref{fig:scaling_vs_sc} emphasizes the importance of using different frames for each stream by showing superior scaling performance for \method.

\begin{table*}[t!]
\centering\small
\captionof{table}{
\textbf{\method\ outperforms baseline on free-form description task.} LLM-as-a-judge score in Likert scale, sentence similarity, and ROUGE-L scores are reported on the EventHallusion description task.
}
\resizebox{1.0\textwidth}{!}{
\begin{tabular}{llcccccccccccc}
\toprule
& \textbf{Nframe =} & \multicolumn{3}{c}{\textbf{2}} & \multicolumn{3}{c}{\textbf{4}} & \multicolumn{3}{c}{\textbf{8}} & \multicolumn{3}{c}{\textbf{16}}\\
\cmidrule(lr){3-5}
\cmidrule(lr){6-8}
\cmidrule(lr){9-11}
\cmidrule(lr){12-14}
Model & Method & LLM & STS & ROUGE & LLM & STS & ROUGE & LLM & STS & ROUGE & LLM & STS & ROUGE \\
\midrule
\multirow{2}{*}{\makecell[l]{Qwen2.5‑VL‑7B\\\citep{bai2025qwen2}}} & Baseline & 2.06 & 44.3 & 19.8 & 2.41 & 48.0 & 19.2 & \textbf{2.50} & 49.2 & 19.7 & 2.05 & 43.5 & 19.2\\
& \vpsf{VPS ($J=4$)} & \vpsf{\textbf{2.25}} & \vpsf{\textbf{46.1}} & \vpsf{19.8} & \vpsf{\textbf{2.47}} & \vpsf{\textbf{48.7}} & \vpsf{\textbf{19.5}} & \vpsf{\textbf{2.43}} & \vpsf{\textbf{49.8}} & \vpsf{\textbf{19.9}} & \vpsf{\textbf{2.52}} & \vpsf{\textbf{50.8}} & \vpsf{\textbf{19.8}}\\
\midrule
\multirow{2}{*}{\makecell[l]{InternVL3-8B\\\citep{zhu2025internvl3}}} & Baseline & 2.09 & 41.9 & 9.24 & 2.18 & 42.1 & \textbf{9.14} & 2.30 & 43.0 & \textbf{9.26} & 2.24 & 44.5 & 9.20\\
& \vpsf{VPS ($J=4$)} & \vpsf{\textbf{2.14}} & \vpsf{\textbf{43.7}} & \vpsf{\textbf{9.89}} & \vpsf{\textbf{2.23}} & \vpsf{\textbf{43.0}} & \vpsf{9.10} & \vpsf{\textbf{2.39}} & \vpsf{\textbf{45.0}} & \vpsf{9.24} & \vpsf{\textbf{2.33}} & \vpsf{\textbf{44.8}} & \vpsf{\textbf{9.29}}\\
\midrule
\multirow{2}{*}{\makecell[l]{Gemma3-12B\\\citep{team2025gemma}}} & Baseline & 1.94 & 45.8 & 18.0 & \textbf{2.26} & 48.2 & \textbf{17.8} & \textbf{2.51} & 48.7 & 17.3 & 2.33 & 48.4 & 17.9\\
& \vpsf{VPS ($J=4$)} & \vpsf{\textbf{2.05}} & \vpsf{\textbf{47.5}} & \vpsf{\textbf{18.7}} & \vpsf{2.21} & \vpsf{\textbf{48.4}} & \vpsf{17.5} & \vpsf{2.49} & \vpsf{\textbf{48.9}} & \vpsf{\textbf{18.8}} & \vpsf{\textbf{2.40}} & \vpsf{\textbf{48.5}} & \vpsf{\textbf{18.5}}\\
\bottomrule
\end{tabular}
}
\label{tab:freeform}
\end{table*}

\paragraph{Free form answering}

Another advantage of \method\ is that it can be used for free form answering such as captioning, whereas methods such as Self-consistency cannot. We test the capability of \method\ on the description task of EventHallusion~\citep{zhang2024eventhallusion}, and evaluate the performance using LLM-as-a-judge~\citep{zheng2023judging} following \cite{zhang2024eventhallusion}, sentence similarity (STS), and ROUGE-L scores. We use \code{Gemini-2.5-flash} for computing the LLM-as-a-judge score, and use the SentenceTransformer library from huggingface (Model: \code{all-MiniLM-L6-v2}) to compute STS. Experimental details are provided in Appendix~\ref{appendix:exp_free_form}. We see in Tab.\ref{tab:freeform} that VPS consistently outperforms the vanilla baseline in most metrics, with a qualitative example in Tab.~\ref{tab:qualitative_result}, illustrating how VPS successfully mitigates the hallucination caused in the baseline decoding strategy.

\paragraph{Incorporation of other decoding strategies}

There exists many different strategies that aim to enhance the decoding results zero-shot, at the expense of additional parallel computational overhead, similar to \method. 
For instance, TCD~\citep{zhang2024eventhallusion} extends contrastive decoding to videos, by constructing a negative stream where half of the frames are zeroed-out in an interleaving fashion. RITUAL~\citep{woo2024ritual} constructs an augmentation of the visual frame for collaborative decoding. Implementation details can be found in App.~\ref{appendix:exp_decoding_comp}.
Is \method\ better than these methods? More importantly, can we use \method\ in unison with these other approaches? In Tab.~\ref{tab:complementary}, we answer both of these questions with an affirmative. Here, we show that using other decoding strategies such as TCD and RITUAL further improves \method, hinting that the advantage gained through \method\ is largely orthogonal to the previous approaches.

\begin{table*}[t!]
\centering\small

\begin{minipage}[t]{0.58\textwidth}
    \centering
    \captionof{table}{
        \textbf{\method\ offers complementary improvements.} \method\ can be used together with existing inference-time decoding strategies. The same number of frames are used as in Tab.~\ref{tab:main}, with Qwen2.5-VL-7B as the base model.
    }
    \label{tab:complementary} 
    \resizebox{\linewidth}{!}{%
    \begin{tabular}{l c c c c c c c c}
    \toprule
    & \multicolumn{4}{c}{\textbf{Video-MME}} & \multicolumn{4}{c}{\textbf{EventHallusion}}\\
    \cmidrule(lr){2-5}
    \cmidrule(lr){6-9}
    Method & Short & Medium & Long & Overall & Entire & Misleading & Mix & Overall \\
    \midrule
    Baseline & 0.656 & 0.509 & 0.441 & 0.535 & 0.518 & 0.508 & 0.725 & 0.565 \\
    \vpsf{+ VPS ($J=4$)} & \vpsf{\textbf{0.668}} & \vpsf{\textbf{0.523}} & \vpsf{\textbf{0.467}} & \vpsf{\textbf{0.553}} & \vpsf{\textbf{0.605}} & \vpsf{\textbf{0.560}} & \vpsf{\textbf{0.873}} & \vpsf{\textbf{0.650}} \\
    \midrule
    TCD & 0.651 & 0.510 & 0.453 & 0.538 & 0.588 & 0.497 & 0.863 & 0.614 \\
    \vpsf{+ VPS ($J=4$)} & \vpsf{\textbf{0.668}} & \vpsf{\textbf{0.547}} & \vpsf{\textbf{0.479}} & \vpsf{\textbf{0.564}} & \vpsf{\textbf{0.605}} & \vpsf{\textbf{0.576}} & \vpsf{\textbf{0.882}} & \vpsf{\textbf{0.662}} \\
    \midrule
    RITUAL & 0.652 & 0.512 & 0.455 & 0.540 & 0.520 & 0.510 & 0.762 & 0.572 \\
    \vpsf{+ VPS ($J=4$)} & \vpsf{\textbf{0.665}} & \vpsf{\textbf{0.535}} & \vpsf{\textbf{0.472}} & \vpsf{\textbf{0.559}} & \vpsf{\textbf{0.601}} & \vpsf{\textbf{0.579}} & \vpsf{\textbf{0.885}} & \vpsf{\textbf{0.655}} \\
    \bottomrule
    \end{tabular}
    }
\end{minipage}
\hfill 
\begin{minipage}[t]{0.4\textwidth}
    \centering
    \captionof{table}{
        \textbf{Frame sampling strategy} for \method\ on EventHallusion with Qwen2.5-VL-7B depending on the number of frames. Uniform: sampling strategy used in the rest of the experiments.
    }
    \label{tab:frame_sampling} 
    \resizebox{\linewidth}{!}{%
    \begin{tabular}{l c c c c}
    \toprule
    Method & 2 & 4 & 8 & 16 \\
    \midrule
    Baseline & 0.559 & 0.594 & 0.660 & 0.655 \\
    Dense ($J=4$) & 0.546 & 0.587 & 0.665 & 0.658 \\
    BOLT ($J=1$) & 0.579 & 0.612 & 0.612 & 0.645 \\
    BOLT ($J=4$) & \textbf{0.581} & \textbf{0.630} & 0.638 & 0.643 \\
    \vpsf{Uniform ($J=4$)} & \vpsf{0.569} & \vpsf{0.623} & \vpsf{\textbf{0.675}} & \vpsf{\textbf{0.682}} \\
    \bottomrule
    \end{tabular}
    }
\end{minipage}
\vspace{-0.5cm}
\end{table*}

\paragraph{Choices on frame sampling}
The design space of \method\ includes the frame selector function $\Sc$ in \eqref{eq:frame_selector}. In Sec.~\ref{subsec:theory}, we argued that the canonical uniform sampling strategy is already a sufficiently good choice. Is this really the case empirically, and is there a better strategy? In Tab.~\ref{tab:frame_sampling}, we provide answers to these questions. See App.~\ref{appendix:exp_frame_sampling} for experimental details. First, we observe that dense sampling is largely inferior due to the excessive bias $B_j$ this yields. Second, when the number of frames used per stream is small compared to the video length, incorporating other frame sampling strategies such as BOLT~\citep{liu2025bolt} yields further improvement, as this strategy lets the VideoLLM attend to more relevant information, decreasing the bias $B_j$. Finally, when the number of frames used per stream is sufficient, the canonical sampling strategy is the winner.

\section{Conclusion}
\label{sec:conclusion}

In this work, we introduced Video Parallel Scaling (VPS), a training-free inference-time method designed to overcome the perceptual limitations of VideoLLMs. The core challenge for these models is that increasing the number of input frames for finer temporal understanding leads to prohibitive computational costs and performance degradation. VPS addresses this by processing multiple, disjoint subsets of video frames in parallel streams. By aggregating the output probabilities from these complementary views, VPS effectively expands the model's perceptual bandwidth without increasing the context length or memory footprint of any single stream.
Our theoretical analysis demonstrates that VPS contracts the Chinchilla scaling law by leveraging uncorrelated visual information from parallel streams. Extensive experiments across a diverse range of models (from 2B to 32B parameters), architectures, and benchmarks consistently validate our approach. We show that VPS provides significant performance gains, scales more favorably than alternatives like Self-consistency, and is orthogonal to other decoding strategies, allowing it to be used in concert for even greater improvements. These results establish VPS as a robust and memory-efficient framework for enhancing the temporal reasoning capabilities of modern VideoLLMs.

\paragraph{Limitations and future directions}

While VPS demonstrates consistent improvements, its current implementation presents opportunities for future refinement. Presently, VPS applies a uniform weighting to the output of each parallel stream. A promising avenue for future work would be to develop a dynamic weighting scheme. For instance, streams could be weighted based on an information-theoretic measure like the entropy of their output distributions, potentially prioritizing more ``confident'' or informative views~\citep{farquhar2024detecting}. Furthermore, the current aggregation method is a simple summation of probabilities. Exploring aggregation schemes akin to a multi-agent debate~\citep{du2023improving}, where streams can interact or influence each other's outputs before a final decision, could lead to a more nuanced and accurate final prediction.

{
    \small
    \bibliographystyle{ieeenat_fullname}
    \bibliography{main}
}

\clearpage
\onecolumn
\appendix
\section*{\centering \LARGE \textbf{Appendix}}
\section*{\centering \textbf{Supplementary Materials}}


\section{Proofs}
\label{appendix:derivation}

In \cite{chen2025parallel}, the inputs to the parallel streams are learnable transformation of the same input $x$, so that one can assume that each parallel stream follows \eqref{eq:simplified_chinchilla} in an unbiased way, leading to a simplification in the analysis of the parallel scaling law. We start by reviewing the result from \cite{chen2025parallel}.
\begin{lemma}[\cite{chen2025parallel}]
\label{lemma:parscale}
    The loss of ParScale with $J$ streams, each indicated with $p_j$, follow
    \begin{align}
    \label{eq:parscale_chinchilla}
        L^{\rm ParScale}(N, J) = E + \frac{A}{(NJ^{1/\alpha})^\alpha} D + \Oc(\Delta_j^3), \quad D := (J-1)\rho + 1,
    \end{align}
    where $\Delta_j := \frac{p_j - p}{p}$. $D$ depends on the correlation $\rho$ of the residuals between the streams. The loss decays fastest when $\rho = 0$, and degrades to the original Chinchilla law when $\rho = 1$.
\end{lemma}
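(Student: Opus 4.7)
The plan is to derive the formula by (i) writing the aggregated prediction as a multiplicative perturbation of the truth, (ii) Taylor-expanding the log-likelihood to quadratic order in $\Delta_j$, and (iii) calibrating the per-stream noise level against the single-stream Chinchilla law.

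First I would set $\bar{p}(y\mid x) := \frac{1}{J}\sum_{j=1}^J p_j(y\mid x)$ and note that $\bar{p} = p\bigl(1+\bar{\Delta}\bigr)$ with $\bar{\Delta} := \frac{1}{J}\sum_{j=1}^J \Delta_j$. The cross-entropy against the true distribution then satisfies
$$L^{\rm ParScale}(N,J) \;=\; -\E_{x,y\sim p}[\log \bar{p}] \;=\; E \;-\; \E[\log(1+\bar{\Delta})].$$
Using the expansion $\log(1+u) = u - \tfrac{1}{2}u^2 + \Oc(u^3)$ around $u=0$ then reduces the problem to computing the first two moments of $\bar{\Delta}$ under $p$.

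The linear term vanishes identically, since for each $j$ the chain $\E_{x,y\sim p}[\Delta_j] = \E_x\bigl[\sum_y p(y|x)\cdot\tfrac{p_j - p}{p}\bigr] = \E_x\bigl[\sum_y (p_j - p)\bigr] = 0$ holds because both $p$ and $p_j$ are normalized distributions. For the quadratic term I would invoke the symmetry assumption that every stream shares the same per-sample variance $\sigma^2 := \E[\Delta_j^2]$ and common pairwise correlation $\rho := \E[\Delta_i\Delta_j]/\sigma^2$ for $i\neq j$, obtaining
$$\E[\bar{\Delta}^2] \;=\; \frac{1}{J^2}\sum_{i,j}\E[\Delta_i\Delta_j] \;=\; \frac{\sigma^2}{J}\bigl[1+(J-1)\rho\bigr] \;=\; \frac{\sigma^2 D}{J},$$
so that $L^{\rm ParScale}(N,J) = E + \sigma^2 D/(2J) + \Oc(\Delta_j^3)$.

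Finally I would calibrate $\sigma^2$ via the single-stream limit: setting $J=1$ yields $L(N,1) = E + \sigma^2/2 + \Oc(\Delta^3)$, which must match the standard Chinchilla law $E + A/N^\alpha$ of Eq.~\ref{eq:simplified_chinchilla}, forcing $\sigma^2/2 = A/N^\alpha$. Substituting back and using the algebraic identity $(NJ^{1/\alpha})^\alpha = N^\alpha J$ yields the claimed formula. The main obstacle will be rigorously controlling the Taylor remainder $\Oc(\bar{\Delta}^3)$ after taking expectation, which requires either the assumption that $|\Delta_j|$ is uniformly small (plausible since $p_j\to p$ as $N$ grows) or a uniform third-moment bound on the error; the homogeneity assumption across streams — that all pairs share a single $\rho$ and all streams a single $\sigma^2$ — should also be flagged explicitly, as it is precisely what collapses the double sum into the clean factor $D = 1+(J-1)\rho$.
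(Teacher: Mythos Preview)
Your proposal is correct and follows essentially the same route as the paper's proof: write the aggregated prediction as $p(1+\bar\Delta)$, Taylor-expand $\log(1+\bar\Delta)$ to second order, show the linear term vanishes, decompose the quadratic term via the common-$\rho$ assumption, and calibrate $\sigma^2/2 = A/N^\alpha$ against the single-stream Chinchilla law. Your treatment is in fact slightly more explicit than the paper's in two places---the normalization argument for $\E[\Delta_j]=0$ and the flagging of the homogeneity and small-$|\Delta_j|$ assumptions---but the skeleton is identical.
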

\begin{proof}
    For each stream, following \eqref{eq:simplified_chinchilla}, the CE loss reads
    \begin{align}
        L_i &= \Ed_{x,y} [-\log p_j(y|x)] \\
        &= \Ed_{x,y} [-\log \{p \cdot (1 + \Delta_j)\}] \\
        &= \underbrace{\Ed_{x,y} [-\log p]}_{=: E} + \underbrace{\Ed_{x,y}[-\log (1 + \Delta_j)]}_{A/N^\alpha},
    \end{align}
    which can be deduced by comparing the loss with \eqref{eq:simplified_chinchilla}. Using the Taylor expansion $\log(1 + x) = x - \frac{x^2}{2} + \Oc(x^3)$,
    we can expand the second term of the rhs
    \begin{align}
        \Ed_{x,y}\left[
        \Delta_j - \frac{\Delta_j^2}{2} + \Oc(\Delta_j^3)
        \right] &= \Ed_{x,y}\left[\frac{p_j - p}{p}\right] + \Ed_{x,y}\left[\frac{\Delta_j^2}{2}\right] + \Oc(\Delta_j^3).
    \end{align}
    Since $\Ed_{x,y}[p_j] = p$, we have that
    \begin{align}
    \label{eq:2anlpha}
        \Ed_{x,y}\left[\Delta_j^2\right] = 2A/N^\alpha + \Oc(\Delta_j^3).
    \end{align}
    Now, we are ready to compute the loss $L$ by averaging the streams.
    Let $\Delta := \frac{1}{J}\sum_{j=1}^J \Delta_j$. The loss reads
    \begin{align}
        L^{\rm ParScale}(N, J) &= E + \Ed_{x,y}[-\log(1 + \Delta)] \\
        &= E + \Ed_{x,y}\left[\frac{\Delta^2}{2}\right] + \Oc(\Delta^3) \\
        &= E + \frac{1}{2J^2}\Ed_{x,y}\left[\sum_{j=1}^J \Delta_j^2 + 2\sum_{j<k}\Delta_j \Delta_k\right] + \Oc(\Delta_j^3) \\
        &\stackrel{\eqref{eq:2anlpha}}{=} E + \frac{A}{JN^\alpha} + \frac{1}{J^2}\Ed_{x,y}\left[\sum_{j<k}\Delta_j \Delta_k\right] + \Oc(\Delta_j^3)\\
        &= E + \frac{A}{JN^\alpha} + \frac{1}{J^2} \cdot J(J-1) \cdot \rho \frac{A}{N^\alpha} + \Oc(\Delta_j^3) \\
        &= E + \frac{A}{(NJ^{1/\alpha})^\alpha}\left[(J-1)\rho + 1\right] + \Oc(\Delta_j^3)
    \end{align}
\end{proof}

We are now ready to derive our proposition.

\addtocounter{proposition}{-1}
\begin{proposition}
\label{prop:vps_formal}
    A VideoLLM that is shown some subset of frames $x_j$ follow
    \begin{align}
    \label{eq:ce_loss_per_stream_video}
        L_j^{\rm VideoLLM}(N) = E + \frac{A}{N^\alpha} + B_j + \Oc(B_j^2) + \Oc(\Delta_j^3).
    \end{align}
    Further, the expected CE loss of \method~ with $J$ streams follow
    \begin{align}
    \label{eq:vps_loss}
        L^{\method}(N, J) = E + \frac{A}{(NJ^{1/\alpha})^\alpha}[1 + (J-1)\rho] + \bar{B}(J) + \Oc(\bar{B}(J)^2) + \Oc(\Delta_j^3),
    \end{align}
    where $\bar{B}(J) = \frac{1}{J}\sum_{j=1}^J B_j$, and $\rho$ is the correlation coefficient between $\varepsilon_i, \varepsilon_j, i \neq j$.
\end{proposition}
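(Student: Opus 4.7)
The plan is to adapt the proof of Lemma~\ref{lemma:parscale}, keeping its overall structure intact but carrying the nonzero per-stream bias $B_j$ through the Taylor expansion of $-\log(1+\cdot)$. The conceptual difference from the ParScale case is the decomposition $\Delta_j = -B_j + \varepsilon_j$ in \eqref{eq:bias_from_seeing_fewer_frames} into a systematic bias (induced by seeing only the frames $\Sc_{K_j}(x)$) and a mean-zero residual $\varepsilon_j$. This is where all the new work concentrates: the unbiasedness $\Ed[p_j]=p$ used in Lemma~\ref{lemma:parscale} no longer holds, so a linear-in-$\Delta_j$ term survives in expectation and becomes the bias offset in the scaling law.

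First I would derive the per-stream loss \eqref{eq:ce_loss_per_stream_video}. Writing $L_j^{\rm VideoLLM}(N)=E+\Ed_{x,y}[-\log(1+\Delta_j)]$ and Taylor-expanding to second order, the first-order term gives $-\Ed[\Delta_j]=B_j$, while the second-order term requires $\Ed[\Delta_j^2]$. Expanding $\Delta_j^2 = B_j^2 - 2B_j\varepsilon_j + \varepsilon_j^2$ and using $\Ed[\varepsilon_j]=0$ leaves $\Ed[\Delta_j^2]=B_j^2+\Ed[\varepsilon_j^2]$. Mirroring the modeling step \eqref{eq:2anlpha} in Lemma~\ref{lemma:parscale}, I would posit that the mean-zero residual still obeys the Chinchilla variance law $\Ed[\varepsilon_j^2] = 2A/N^\alpha + \Oc(\Delta_j^3)$; this produces the stated $A/N^\alpha$ contribution and deposits the $B_j^2$ piece into the $\Oc(B_j^2)$ remainder.

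For the aggregated loss \eqref{eq:vps_loss}, I would repeat the same Taylor procedure, now applied to the average $\bar\Delta := \tfrac{1}{J}\sum_j \Delta_j = -\bar B(J) + \bar\varepsilon$ with $\bar\varepsilon := \tfrac{1}{J}\sum_j \varepsilon_j$. The first-order contribution produces the $\bar B(J)$ offset. The only non-routine step is computing $\Ed[\bar\varepsilon^2]$: expanding the square yields $J$ diagonal terms $\Ed[\varepsilon_j^2]$ and $J(J-1)$ off-diagonal terms, each equal to $\rho\,\Ed[\varepsilon^2]$ by the definition of the pairwise correlation coefficient. Substituting $\Ed[\varepsilon^2]=2A/N^\alpha$ and reorganizing $A/(JN^\alpha) = A/(NJ^{1/\alpha})^\alpha$ gives exactly the announced factor $[1+(J-1)\rho]$, while the $\bar B(J)^2$ and cross terms collapse into the stated $\Oc(\bar B(J)^2)$ remainder.

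The hard part will be the remainder bookkeeping rather than any new identity. The bias $B_j$ is not a priori small (a badly-sampled stream can be quite off), so the validity of the second-order expansion really hinges on treating $B_j$ and $\varepsilon_j$ as jointly perturbative; once that regime is granted, the cross-term $B_j\varepsilon_j$ vanishes in expectation and $B_j^2$ is absorbed by the announced $\Oc(B_j^2)$. In the aggregated case the same issue reappears for $\bar B(J)$, and one must check that the implicit constants do not blow up with $J$ — which they do not, because the dangerous contributions to $\Ed[\bar\Delta^2]$ are precisely the $(J-1)\rho$ cross-correlations that are already displayed in the main term. Everything else is routine algebra matching that of Lemma~\ref{lemma:parscale}.
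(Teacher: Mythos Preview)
Your plan is correct and matches the paper's proof essentially step for step: Taylor-expand $-\log(1+\Delta)$, isolate the bias via $\Ed[\Delta_j]=-B_j$, identify the residual variance with the Chinchilla term via \eqref{eq:2anlpha}, then repeat for the average with the off-diagonal cross-correlations producing the $(J-1)\rho$ factor. The only cosmetic difference is that you split $\bar\Delta=-\bar B(J)+\bar\varepsilon$ before squaring, whereas the paper expands $\Ed[\bar\Delta^2]$ directly in terms of the $\Delta_j$'s and absorbs the bias pieces into $\Oc(\bar B(J)^2)$ afterward; the two routes are equivalent.
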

\begin{proof}
    First, recall the definition $B_j := -\Ed_{x,y}[\Delta_j],\, \varepsilon_j := \Delta_j + B_j$.
    The per-stream cross-entropy is
    \begin{align}
    \label{eq:vps_per_stream}
        L_j^{\rm VideoLLM}(N) &= -\Ed_{x,y}[\log p_j] = E + \Ed_{x,y}[-\log (1 + \Delta_j)].
    \end{align}
    In order for us to use Taylor approximation, first note that
    \begin{align}
    \label{eq:first_order}
        \Ed_{x,y}[\Delta_j] = \Ed_{x,y}[\varepsilon_j - B_j] = - B_j,
    \end{align}
    as $B_j$ is a constant w.r.t. the expectation in $p(x,y)$. Further, we have that
    \begin{align}
        \Ed_{x,y}[\Delta_j^2] &= \Ed_{x,y}\left[(\varepsilon_j - B_j)^2\right] \\
        &= \Ed_{x,y}[\varepsilon_j^2] - 2\Ed_{x,y}[\varepsilon_j B_j] + \Oc(B_j^2) \\
        &= \Ed_{x,y}[\varepsilon_j^2] + \Oc(B_j^2) \quad (\because \Ed[\varepsilon_j] = 0)
        \label{eq:second_order}
    \end{align}
    Substituting these back into the loss equation leads to
    \begin{align}
        L_j^{\rm VideoLLM}(N) &= E + \frac{1}{2}\Ed[\varepsilon_j^2] + B_j + \Oc(B_j^2) + \Oc(\Delta_j^3) \\
        &\stackrel{\eqref{eq:2anlpha}}{=} E + \frac{A}{N^\alpha} + B_j + \Oc(B_j^2) + \Oc(\Delta_j^3),
    \end{align}
    where we used the result from Lemma~\ref{lemma:parscale} to substitute for the variance of the residual.
    We have completed the first part of the proof.
    
    For \method, let $\bar\varepsilon := \sum_j \varepsilon_j/J$. We have
    \begin{align}
    \label{eq:vps_first_order}
        \Ed_{x,y}[\Delta] = \Ed_{x,y}[\bar\varepsilon - \bar{B}] = -\bar{B}
    \end{align}
    and
    \begin{align}
        \Ed_{x,y}[\Delta^2] &= \frac{1}{J^2}\Ed\left[\left(\sum_{j=1}^J \Delta_j^2\right)\right] \\
        &= \frac{1}{J^2}\Ed\left[\sum_{j=1}^J \Delta_j^2 + 2\sum_{j < k} \Delta_j\Delta_k\right] \\
        &= \frac{1}{J^2} \left(\frac{JA}{N^\alpha} + J\Oc(B_j^2) + J(J-1) \cdot \rho \sqrt{\Ed[\Delta_j^2]\Ed[\Delta_k^2]}\right) \\
        &= \frac{A}{(NJ^{1/\alpha})^\alpha}[1 + (J-1)\rho] + \Oc(\bar{B}(J)^2)
        \label{eq:vps_second_order}
    \end{align}
    Then, the CE loss for \method~reads
    \begin{align}
        L^{\method}(N,J) &= E + \Ed_{x,y}[-\log(1 + \Delta)] \\
        &= E - \underbrace{\Ed_{x,y}[\Delta]}_{\stackrel{\eqref{eq:vps_first_order}}{=} -\bar{B}} + \frac{1}{2}\Ed_{x,y}[\Delta^2] + \Oc(\Delta^3) \\
        &\stackrel{\eqref{eq:vps_second_order}}{=} E + \frac{A}{(NJ^{1/\alpha})^\alpha}[1 + (J-1)\rho] + \bar{B}(J) + \Oc(\bar{B}(J)^2) + \Oc(\Delta^3)
    \end{align}
\end{proof}

\begin{table}[t!]
\centering\small
\captionof{table}{
\textbf{Results of \method\ with logit averaging vs. probability averaging.} Both choices lead to similar results. Small: smallest variant (3B, 2B, 4B) / Base: base variant (7B, 8B, 12B).
}
\resizebox{1.0\textwidth}{!}{
\begin{tabular}{ccccccccccccccc}
\toprule
& & & \multicolumn{4}{c}{\textbf{Qwen2.5-VL}} & \multicolumn{4}{c}{\textbf{InternVL3}} & \multicolumn{4}{c}{\textbf{Gemma3}}\\
\cmidrule(lr){4-7}
\cmidrule(lr){8-11}
\cmidrule(lr){12-15}
& $J$ & Method & 2 & 4 & 8 & 16 & 2 & 4 & 8 & 16 & 2 & 4 & 8 & 16 \\
\midrule
\multirow{4}{*}{\textbf{Small}} & \multirow{2}{*}{2} & logit & 0.496 & \textbf{0.535} & \textbf{0.548} & 0.572 & \textbf{0.435} & \textbf{0.508} & \textbf{0.501} & \textbf{0.525} & \textbf{0.567} & \textbf{0.533} & 0.477 & 0.469 \\
& & prob & \textbf{0.498} & 0.523 & 0.540 & \textbf{0.577} & 0.433 & 0.489 & 0.491 & 0.523 & 0.550 & 0.521 & \textbf{0.491} & \textbf{0.472} \\
\cmidrule(lr){2-15}
& \multirow{2}{*}{4} & logit & 0.498 & 0.545 & \textbf{0.577} & 0.555 & 0.420 & \textbf{0.491} & 0.489 & 0.542 & \textbf{0.569} & \textbf{0.542} & 0.491 & 0.464 \\
& & prob & \textbf{0.511} & 0.545 & 0.564 & \textbf{0.596} & \textbf{0.423} & 0.474 & \textbf{0.496} & \textbf{0.545} & 0.548 & 0.526 & \textbf{0.513} & \textbf{0.486} \\
\midrule
\multirow{4}{*}{\textbf{Base}} & \multirow{2}{*}{2} & logit & 0.557 & 0.601 & \textbf{0.650} & 0.670 & 0.557 & 0.586 & \textbf{0.618} & 0.633 & \textbf{0.784} & 0.777 & 0.753 & \textbf{0.728} \\
& & prob & 0.570 & \textbf{0.633} & 0.648 & \textbf{0.689} & 0.557 & \textbf{0.587} & 0.616 & 0.633 & 0.782 & \textbf{0.779} & \textbf{0.755} & 0.725 \\
\cmidrule(lr){2-15}
& \multirow{2}{*}{4} & logit & 0.569 & 0.623 & \textbf{0.675} & \textbf{0.682} & \textbf{0.572} & \textbf{0.596} & 0.623 & 0.638 & \textbf{0.792} & \textbf{0.785} & 0.753 & \textbf{0.741} \\
& & prob & 0.569 & 0.623 & 0.660 & 0.667 & 0.565 & 0.591 & 0.623 & \textbf{0.640} & 0.790 & 0.782 & \textbf{0.759} & 0.739 \\
\bottomrule
\end{tabular}
}
\label{tab:logit_vs_prob}
\end{table}

\begin{figure}[!t]
    \centering
    \includegraphics[width=\linewidth]{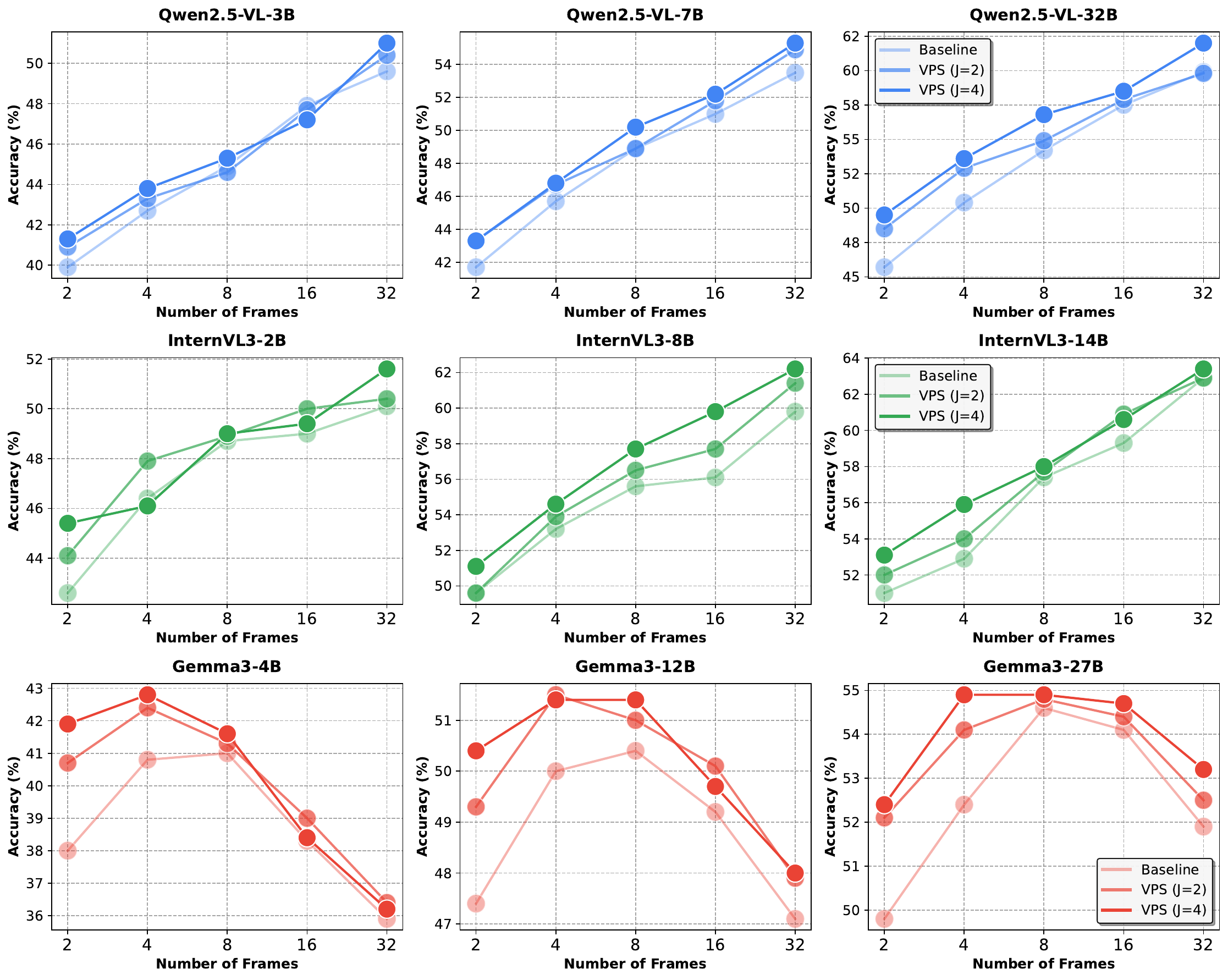}
    \caption{
    \textbf{\method\ consistently improves performance across all dimensions}. Across 3 different model classes (Qwen-2.5-VL, InternVL3, Gemma3), 3 different size (2B - 32B), and number of frames used in context. y-axis denotes the accuracy in the Video-MME multiple-choice QA.
     }
    \label{fig:main_videomme}
\end{figure}

\section{Entropy Weighting}
\label{appendix:entropy_weighting}

Our canonical weighting scheme equally weights the stream. However, in many cases, the different subsets of frames contain varying information---some may contain keyframes that directly answers the given question, while others may be less useful. A natural way to capture whether a given stream is useful is to use entropy
\begin{align}
\label{eq:def_entropy}
    H(x) = -\sum_y p(y|x)\log p(y|x).
\end{align}
When the entropy is high, it means that the output probability of the current token $y$ is close to uniform, which in turn, means that the model is less confident in its answer. In contrast, when the entropy is low, the model is more confident in its answers. Hence, while we do not have oracle access to which subsets of frames are more useful, we can leverage the output entropy as a proxy to measure this value, as also supported by recent literature~\cite{taubenfeld2025confidence,kang2025scalable,sharma2025sequential}.

\begin{wraptable}[10]{r}{0.4\textwidth}
\centering
\setlength{\tabcolsep}{0.2em}
\resizebox{0.38\textwidth}{!}{
\begin{tabular}{lcccc}
\toprule
\textbf{\# Frames} & \multicolumn{2}{c}{\textbf{8}} & \multicolumn{2}{c}{\textbf{16}} \\
\cmidrule(lr){2-3}
\cmidrule(lr){4-5}
$J$ & 4 & 8 & 4 & 8 \\
\midrule
Uniform ($\beta = 0.0$) & 0.508 & 0.528 & 0.541 & 0.552 \\
Entropic ($\beta = 7.0$) & \textbf{0.534} & \textbf{0.560} & \textbf{0.563} & \textbf{0.565} \\
\bottomrule
\end{tabular}
}
\caption{
\textbf{VPS with entropic weighting.} Results on Video-MME with Qwen2.5-VL-7B.
}
\label{tab:vps_entropic_weighting}
\end{wraptable}

We propose an entropy weighting scheme, with
\begin{align}
    w_j(x) \propto \exp\left(-\beta H_j(x)\right),
\end{align}
with $\beta \geq 0$ as a hyper-parameter. To ensure normalization, we employ softmax. Setting $\beta = 0$ reduces to our canonical uniform weighting scheme, and larger values of $\beta$ puts more emphasis on the confident streams. When the model is queried to output a direct answer, as in the benchmark settings, we can additionally use the structure of the problem. Take the multiple-choice question setting of VideoMME, where the model should answer \code{A/B/C/D}. In such cases, taking the entropy over the whole vocabulary yields noisy estimate of the model entropy. Thus, we further propose to use the {\em restricted} entropy over the vocabulary of interest\footnote{For instance, in VideoMME, we consider tokens that are upper or lowercase A/B/C/D with optional spaces.},
\begin{align}
    \bar{H}_j(x) = \sum_{c=1}^K p_j(c|x) \log p_j (c|x),
\end{align}
where $K$ is the cardinality of the vocabulary space of our interest.

\section{Further Results}
\label{appendix:results}

\paragraph{Probability and logit averaging}

In Tab.~\ref{tab:logit_vs_prob}, we compare the results of logit averaging and probability averaging when implementing VPS. Across different model classes, we find that both approaches lead to similar results. Thus, while we assume probability averaging in the theoretical analysis for simplicity, we resort to logit averaging for implementation.

\paragraph{Qualitative results}

We present additional examples of the free form description task in Tab.~\ref{tab:qualitative_result_appendix}.

\section{Experimental Details}
\label{appendix:exp_details}

\subsection{Main experiment}
\label{appendix:exp_details_main}

For Video-MME, following each question, we use the following prompt: \code{``Your response should be a single character: A, B, C, or D. Do not include any other text or explanation.''}. For EventHallusion, following each question, we use the following prompt: \code{``Please answer yes or no.''}. 
All results in the main experiments are obtained through standard sampling with temperature set to 1.0, which is different from the benchmark evaluation settings, where the standard is to resort to greedy decoding.

\subsection{Self consistency experiment}
\label{appendix:exp_self_consistency}

Notice that when decoding a single token, majority voting after decoding, and averaging the probabilities before sampling lead to the same results in the limit of infinite samples. In practice, we notice that due to formatting issues, the performance is slightly better when we use majority voting after the answer is extracted, as such scaffolding generally helps. As the goal in this experiment is to emphasize the importance of using different frames for each stream, we also implement \method\ with majority voting, but with seeing different frames for each stream, for fair comparison against Self-consistency. This way, the difference in scaling solely comes from the difference in input frames.

\begin{table}[!th]
\centering
\begin{minipage}{0.99\columnwidth}\vspace{0mm}    \centering
    \begin{tcolorbox} 
        \raggedright
        \textcolor{Blue9}{\textbf{System prompt:}\\}
        You are an intelligent chatbot designed for evaluating the correctness of generative outputs for video summaries. \\
        Your task is to compare the predicted answer with the pseudo-reference answer and determine if they match meaningfully.  \\
        You should rely more on the video frames than the pseudo-reference caption. \\
        ------ \\
        INSTRUCTIONS: \\
        - Focus on the meaningful match between the predicted answer and the pseudo-reference answer. \\
        - Consider synonyms or paraphrases as valid matches. \\
        - Evaluate the correctness of the prediction compared to the video frames. \\
        
        \textcolor{Blue9}{\textbf{User prompt:}\\}
        Given the reference caption, evaluate the quality of the predicted caption.\\
        Provide your evaluation only as an integer value between 1 and 5, with 5 indicating the highest quality. \\
        Please provide your evaluation in the following format: [evaluation] \\
    \end{tcolorbox}
    \caption{Prompts used for LLM-as-a-judge~\citep{zheng2023judging} evaluation system.}
    \label{tab:llm_as_a_judge}
\end{minipage}
\end{table}

\subsection{Free form experiment}
\label{appendix:exp_free_form}

Since the (pseudo-)ground truth answer in EventHallusion is given in a simple sentence, we use \code{``Summarize the video in one sentence.''} as the prompt. When using LLM-as-a-judge~\citep{zheng2023judging} when evaluating the free form descriptions of the video, we follow \cite{zhang2024eventhallusion} and use the prompt specified in Tab.~\ref{tab:llm_as_a_judge}. 

\subsection{Incorporating other strategies}
\label{appendix:exp_decoding_comp}

For TCD, we construct a negative stream so that the half the frames are zeroed-out in an interleaved fashion. Let $x'$ be the frame-dropped version of the sub-sampled video. Then, TCD is implemented with
\begin{align}
\label{eq:tcd}
    \tilde{p}^\theta(y|x) = (1 + \alpha) p^\theta(y|x) - \alpha p^\theta(y|x'),
\end{align}
where $\alpha \in [0, 1)$ is a constant. Additionally, we set a hyperparameter $\beta \in [0, 1]$ that keeps only the high probability tokens that exceeds the value $\beta\max_w p^\theta(y|x)$, following~\citep{li2023contrastive}. We use the default values $\alpha = 0.5, \beta = 0.1$. When used together with \method, we use $\bar{p}^\theta$ instead of $p^\theta$.
For RITUAL, we consider the following 5 augmentations: horizontal flip, vertical flip, 180 degrees random rotation, color jitter, and gaussian blur. We apply the same augmentation to each frame, and use equal weighting for the original view and the augmented view. When used together with \method, we construct an augmented view per \method\ stream.

\subsection{Frame sampling strategy}
\label{appendix:exp_frame_sampling}

The dense sampling strategy with $J$ streams first makes $J$ chunks from the video sequence. Within the chunk, the frames are uniformly sampled.
BOLT~\citep{liu2025bolt} computes the CLIP similarity between the video frames and the query prompt. A normalized score $s_i$ is then sharpened with
\begin{align}
    s_i^r = \left(
    \frac{s_i - {\rm min}(s)}{{\rm max}(s) - {\rm min}(s)}
    \right)^\alpha
\end{align},
where $\alpha = 3.0$. We then sample the frames according to this sharpened distribution. When using BOLT together with VPS, it is important that we do not sample the same frames for each stream. Hence, we eliminate the frame indices once the frame is sampled from one of the streams, then sample from the truncated distribution.
\begin{table}[!t]
\centering
\resizebox{0.8\textwidth}{!}{%
\begin{tabular}{>{\raggedright\arraybackslash}p{1.5cm} p{11cm}}
\toprule
Video & \includegraphics[width=\linewidth, valign=t]{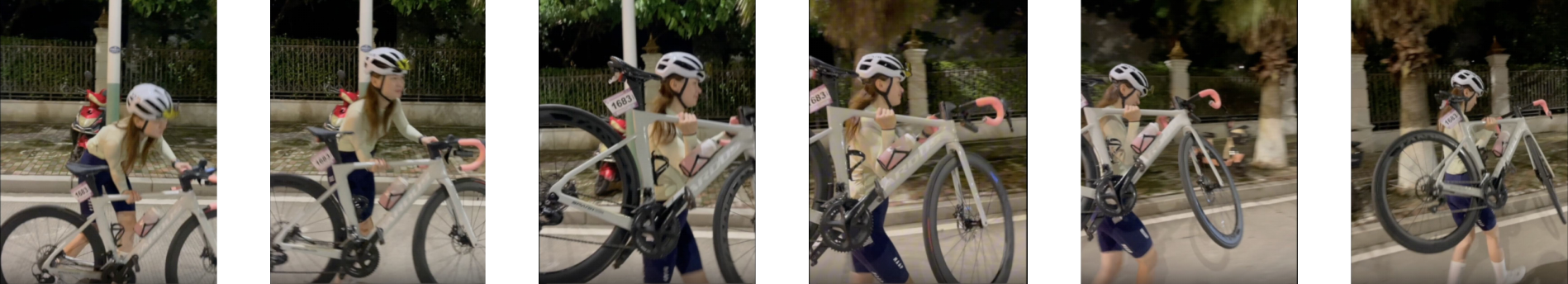}
\\
\cmidrule(l){1-2}
Baseline & A cyclist \textcolor{GoogleRed}{crashes while riding at night}, resulting in her bicycle falling over.\\
\cmidrule(l){1-2}
\method\ ($J=4$) & A cyclist is \textcolor{GoogleBlue}{struggling to carry her bike down a street.}\\
\midrule
Video & \includegraphics[width=\linewidth, valign=t]{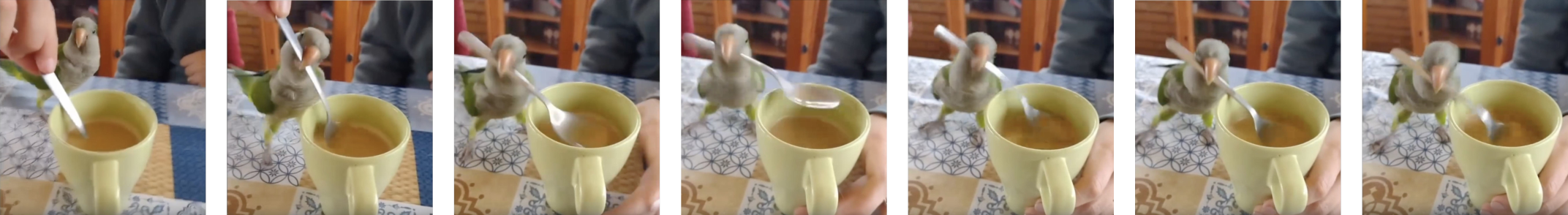}
\\
\cmidrule(l){1-2}
Baseline & A parrot curiously \textcolor{GoogleRed}{interacts with a cup of tea and a spoon}, attempting to participate in a human's tea-drinking ritual.\\
\cmidrule(l){1-2}
\method\ ($J=4$) & A parrot curiously tries to \textcolor{GoogleBlue}{stir a cup of tea with a spoon.}\\
\bottomrule
\end{tabular}
}
\caption{\textbf{Qualitative analysis of VPS.} VPS effectively captures the overall motion depicted in videos from the EventHallusion dataset.}
\vspace{-0.5cm}
\label{tab:qualitative_result_appendix}
\end{table}

\end{document}